\newcommand{\eq}[1]{\begin{align*}#1\end{align*}}
\newcommand{\eqn}[1]{\begin{align}#1\end{align}}
\newtheorem{theorem}{Theorem}
\title{On the exact relationship between the denoising function and the data distribution}
\author{Heikki Arponen\thanks{current affiliation: ultimate.ai} \\ \texttt{heikki@ultimate.ai} 
\and Matti Herranen \\ \texttt{matti@cai.fi}
\and Harri Valpola \\ \texttt{harri@cai.fi}}
\date{The Curious AI Company\\[2ex]%
        }
\begin{document}
\maketitle
\begin{abstract}
We prove an exact relationship between the optimal denoising function and the data distribution in the case of additive Gaussian noise, showing that denoising implicitly models the structure of data allowing it to be exploited in the unsupervised learning of representations. 
This result generalizes a known relationship \cite{alain2014regularized}, which is valid only in the limit of small corruption noise.
\end{abstract}

\section{Introduction}

Denoising is the task of reconstructing the original data samples from the corrupted samples, 
and has recently gained popularity as an unsupervised task for learning representations in deep learning  \cite{bengio2013generalized,alain2014regularized,vincent2011connection, geras2014scheduled, vincent2010stacked, rasmus2015semi, greff2016tagger}. 
Besides practical success, the theoretical basis for learning by denoising is becoming better understood. It has been shown that optimizing denoising performance leads to representations that implicitly model the structure of the data manifold \cite{vincent2010stacked}. More precisely, the optimal denoising function corresponds to the \emph{score} (derivative of the log-probability density with respect to the input) of the data distribution in the limit of small corruption noise \cite{alain2014regularized}. In this note, we generalize the result of \cite{alain2014regularized} and derive an exact relationship between the data distribution and the denoising function in the case of additive Gaussian noise which is valid for arbitrarily large noise. This result was first published in The Curious AI Company blog post \cite{cai_blog}. \\

\section{Denoising function}

Let us assume that clean samples $x$ are drawn i.i.d. from a (generally unknown) data distribution $p_X$. The corrupted samples $\tilde{x}$ are produced from the clean ones by some corruption process: $x \to \tilde{x}$, where $p_{\tilde{X} | X}$ is assumed to be known. The task of denoising is to  reconstruct the clean samples from the corrupted ones: $\hat{x} = g(\tilde{x})$, where $g$ is a (deterministic) denoising function which is optimized to match the reconstructions $\hat{x}$ with the clean samples $x$.
In unsupervised learning we are usually interested in learning the data distribution $p_X$ or its latent representations. Below we show that the denoising function $g$ contains the same information as $p_X$, and therefore by learning $g$ we learn to model the data distribution $p_X$.

In this work, we consider additive Gaussian corruption: $\tilde x = x + \sigma_n \epsilon$, with $\epsilon \sim \mathcal N(0, I)$ and $\sigma_n$ is the standard deviation of the corruption noise. For the reconstruction error we use the standard mean squared error (MSE):\footnote{We simplify the notation by dropping the random variables from the subscripts of the probability distributions: $p(x, \tilde x) \equiv p_{X, \tilde X}(x, \tilde x)$, etc. if there is no chance for confusion.}
\eqn{
\mathcal L_g \doteq \mathbb{E}_{p(x, \tilde x)} \left\{\left\|x - g\left( \tilde x \right) \right\|_2^2\right\}
\label{cost_func}.}
The task is to now find the optimal denoising function $g^*$ by minimizing the reconstruction error with respect to $g$.

%
%

\begin{theorem}
Let the optimal denoising function be defined as
\eq{
g^* \doteq {\arg \min}_g \mathcal L_g.
}
Then the optimal denoising function satisfies the relation
\eqn{\label{final_solution}
g^*\left(\tilde x\right) = \tilde x + \sigma_n^2 \nabla_{\tilde x} \log p (\tilde x).
}
\end{theorem}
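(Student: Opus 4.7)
The plan is to derive the result in two clean steps: first identify $g^*$ as a conditional expectation via pointwise minimization of the MSE, then apply a Stein/Tweedie-type identity using the explicit Gaussian form of $p(\tilde x \mid x)$ to rewrite that conditional expectation as a score.

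First, I would rewrite the loss using $p(x,\tilde x) = p(x \mid \tilde x)\, p(\tilde x)$,
\eq{
\mathcal L_g = \int p(\tilde x) \left( \int p(x\mid \tilde x)\, \|x - g(\tilde x)\|_2^2\, dx \right) d\tilde x,
}
and observe that $g(\tilde x)$ appears only inside the inner integral, so we may minimize pointwise in $\tilde x$. The inner expectation $\mathbb E[\,\|x - g(\tilde x)\|^2 \mid \tilde x\,]$ is a strictly convex quadratic in $g(\tilde x)$, whose unique minimizer is the conditional mean. This yields the well-known identity $g^*(\tilde x) = \mathbb E[x \mid \tilde x]$, valid for arbitrary noise models and not just Gaussian.

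Second, I would use the explicit form of the Gaussian corruption kernel $p(\tilde x \mid x) \propto \exp(-\|\tilde x - x\|^2/(2\sigma_n^2))$ to relate the conditional mean to the score of the marginal $p(\tilde x) = \int p(x)\, p(\tilde x \mid x)\, dx$. The key observation is that $\nabla_{\tilde x}\, p(\tilde x \mid x) = \sigma_n^{-2}(x - \tilde x)\, p(\tilde x \mid x)$. Differentiating under the integral (justified by the rapid Gaussian decay, assuming $p_X$ has, say, finite second moment),
\eq{
\sigma_n^2\, \nabla_{\tilde x} p(\tilde x) = \int p(x)\, (x - \tilde x)\, p(\tilde x \mid x)\, dx = p(\tilde x)\int p(x \mid \tilde x)\,(x - \tilde x)\, dx,
}
using Bayes' rule. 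Dividing through by $p(\tilde x) > 0$ gives
\eq{
\sigma_n^2\, \nabla_{\tilde x} \log p(\tilde x) = \mathbb E[x \mid \tilde x] - \tilde x,
}
and combining with the first step delivers the claimed formula.

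The only subtle point, and probably the main thing worth stating carefully, is the exchange of $\nabla_{\tilde x}$ and the integral over $x$. The Gaussian kernel and its gradient are both dominated by integrable functions uniformly on compact neighborhoods of any $\tilde x$, so dominated convergence applies as long as $p_X$ has a finite second moment (or even just is a probability measure, by a more careful truncation argument). Everything else is algebra; no further regularity on $p_X$ beyond measurability is needed because the Gaussian convolution automatically renders $p(\tilde x)$ smooth and strictly positive.
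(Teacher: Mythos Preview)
Your proposal is correct and follows essentially the same route as the paper: first identify $g^*(\tilde x)=\mathbb E[x\mid\tilde x]$ by pointwise minimization of the MSE, then use the Gaussian identity $\nabla_{\tilde x}p(\tilde x\mid x)=\sigma_n^{-2}(x-\tilde x)p(\tilde x\mid x)$ together with differentiation under the integral and Bayes' rule to convert the conditional mean into the score of $p(\tilde x)$. Your treatment is slightly more careful than the paper's in that you explicitly justify the exchange of $\nabla_{\tilde x}$ and the integral via dominated convergence and note that the Gaussian convolution automatically makes $p(\tilde x)$ smooth and strictly positive, whereas the paper simply invokes Leibniz's rule.
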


\begin{proof}
It is easy to show that the optimal denoising function is the minimum mean square estimator (cf. \cite{oppenheim2015signals} Chapter 8) which can be written as 
\eqn{
g^*(\tilde x) = \mathbb{E} \left\{ x | \tilde x \right\}
\label{min_estimator}.}
We present a derivation here for completeness. By writing $p(x, \tilde x) = p(x | \tilde x)p(\tilde x)$ Eq.~(\ref{cost_func}) can be written as
\eq{
\mathcal L_g = \int p(\tilde x) \left(\int p\left(x | \tilde x\right) \left\|x - g\left( \tilde x \right) \right\|_2^2 \dif x
\right) \dif \tilde{x}
.}
The minimum of this expression w.r.t function $g$ can be obtained by setting the functional derivative w.r.t $g$ to zero:
\eq{
0 = \left.\frac{\delta\mathcal L_g}{\delta g(\tilde x)}\right|_{g = g^*} = 2 p(\tilde x) \int p\left(x | \tilde x\right) \left(g^*\left( \tilde x \right) - x \right) \dif x = 2 p(\tilde x) \left(g^*\left( \tilde x \right) - \int p\left(x | \tilde x\right) x \dif x \right) 
,}
from which Eq.~(\ref{min_estimator}) directly follows.\footnote{In the region where $p(\tilde x) = 0$ and hence $p(x, \tilde x) = 0$, $\mathcal L_g$ vanishes identically and the optimal denoising function $g^*$ is not well defined.}

To proceed, using Bayes' rule, we can rewrite Eq.~(\ref{min_estimator}) as
\eqn{
g^* (\tilde x) = \int x \, p\left(x | \tilde x\right) \dif x = \frac{\int x \, p\left(\tilde x | x\right) p(x) \dif x}{p(\tilde x)}.
\label{denoising_func}}
For the additive Gaussian corruption noise, discussed above, the corruption distribution is given by
\eq{
p\left(\tilde x | x\right) = \frac{1}{(2 \pi \sigma_n^2)^{d / 2}} \exp\left\{ -(\tilde x - x)^2 / (2 \sigma_n^2) \right\}.
}
Taking the derivative of this expression with respect to $\tilde x$ and reordering terms, we obtain the identity
%
%
\eq{
x p\left(\tilde x | x\right) = \tilde x p\left(\tilde x | x\right) + \sigma_n^2 \nabla_{\tilde x} p\left(\tilde x | x\right).
}
Inserting this expression into Eq.~(\ref{denoising_func}), we then obtain
\eq{
g^* (\tilde x) = \frac{1}{p(\tilde x)} \left(\tilde x \int p(\tilde x | x) p(x) \dif x + \sigma_n^2 \int \nabla_{\tilde x} p(\tilde x | x) p(x) \dif x\right).
}
In the second term we can reverse the order of differentiation and integration by Leibniz's rule. Then, by using $p(x, \tilde x) = p(\tilde x | x) p(x)$ the integrals in both terms are trivial marginalizations, and we obtain the final result

\eq{
g^*\left(\tilde x\right) = \tilde x + \sigma_n^2 \frac{\nabla_{\tilde x} p (\tilde x)}{p (\tilde x)} = \tilde x + \sigma_n^2 \nabla_{\tilde x} \log p (\tilde x).
}
\end{proof}
\section{Discussion}
Eq.~(\ref{final_solution}) generalizes the result by Alain and Bengio \cite{alain2014regularized}: 
\eq{
g^*\left(\tilde x\right) = \tilde x + \sigma_n^2 \nabla_{\tilde x} \log p_X(\tilde x) + o\left( \sigma_n^2\right),
}
which holds in the limit of small corruption noise. Note that in this equation $p_X(\tilde x)$ is the uncorrupted data distribution evaluated at the point $X = \tilde x$, whereas in Eq.~(\ref{final_solution}) $p(\tilde x) \equiv p_{\tilde{X}}(\tilde x)$ is the corrupted data distribution. 

The relation between the denoising function and the data  distribution in Eq.~(\ref{final_solution}) can be inverted by integration with respect to $\tilde x$:
\eqn{\label{inverse_solution}
p (\tilde x) = \frac{1}{Z} \exp\left\{ \frac{1}{\sigma_n^2} \int_{\mathcal C_0^{\tilde x}} \left( g(x') - x' \right) \cdot \dif x'\right\},
}
where $\mathcal C_0^{\tilde x}$ denotes an arbitrary contour from $0$ to $\tilde x$ and $Z$ is a normalization constant. The contour integral yields a unique value due to Green's theorem, since the curl of a gradient always vanishes: $\nabla \times \nabla = 0$. Furthermore, given that $p(\tilde x|x)$ depends only on the difference $\tilde x - x$ and hence the corruption process is a convolution operation: $p(\tilde x) = \int p(\tilde x|x) p(x) \dif x$, the uncorrupted data distribution $p(x)$ can in principle be solved in terms of the corrupted distribution $p(\tilde x)$ by a deconvolution. Combined with Eqs.~(\ref{final_solution}) and (\ref{inverse_solution}), this leads to an exact and invertible relationship between the data distribution and the optimal denoising function: $p(x) \longleftrightarrow p(\tilde x) \longleftrightarrow g^*\left(\tilde x\right)$, proving that the latter captures exactly the same information as the former. It is worth noting that in this way, by learning the denoising function, one can in principle learn arbitrarily complex structures of the data distribution, while for instance a regression task only learns the expectation value of the output conditioned on the input. 

While the formal expression in Eq.~(\ref{denoising_func}) applies to any corruption distribution $p(\tilde x | x)$, the result in Eq.~(\ref{final_solution}) involving the gradient of the log-probability density is valid only for additive Gaussian corruption noise. 
It would be interesting to explore similar relationships between the data distribution and the optimal denoising function for other corruption processes, such as multiplicative Gaussian noise or dropout corruption.



\subsection*{Acknowledgements}
We would like to thank our colleagues at the Curious AI Company, especially Alexander Ilin, Vikram Kamath and Mathias Berglund.

\bibliographystyle{plain}
\bibliography{bibfile}

\end{document}